\colorlet{shadecolor}{yellow}
\newcommand{\vect}[1]{\boldsymbol{\mathbf{#1}}}
\newtheorem{theorem}{Theorem}
\newtheorem{definition}{Definition}
\begin{document}
\bstctlcite{IEEEexample:BSTcontrol}
    \title{A Gradient-Aware Search Algorithm for Constrained Markov Decision Processes}
  \author{Sami~Khairy,
      Prasanna~Balaprakash,
     Lin X. Cai
  \thanks{Sami Khairy and Lin X. Cai are with the Department of Electrical and Computer Engineering, Illinois Institute of Technology, Chicago, IL 60616, USA.
E-mail: skhairy@hawk.iit.edu, lincai@iit.edu.}
  \thanks{Prasanna Balaprakash is affiliated with the Mathematics and Computer Science Division and Leadership Computing Facility, Argonne National Laboratory,  Lemont, IL 60439, USA. E-mail: pbalapra@anl.gov.}%
 }  

\maketitle

\begin{abstract}
The canonical solution methodology for finite constrained Markov decision processes (CMDPs), where the objective is to maximize the expected infinite-horizon discounted rewards subject to the expected infinite-horizon discounted costs constraints, is based on convex linear programming. In this brief, we first prove that the optimization objective in the dual linear program of a finite CMDP is a piece-wise linear convex function (PWLC) with respect to the Lagrange penalty multipliers. Next, we propose a novel two-level Gradient-Aware Search (GAS) algorithm which exploits the PWLC structure to find the optimal state-value function and Lagrange penalty multipliers of a finite CMDP. The proposed algorithm is applied in two stochastic control problems with constraints: robot navigation in a grid world and solar-powered unmanned aerial vehicle (UAV)-based wireless network management. We empirically compare the convergence performance of the proposed GAS algorithm with binary search (BS), Lagrangian primal-dual optimization (PDO), and Linear Programming (LP). Compared with benchmark algorithms, it is shown that the proposed GAS algorithm converges to the optimal solution faster, does not require hyper-parameter tuning, and is not sensitive to initialization of the Lagrange penalty multiplier. 
\end{abstract}

\begin{IEEEkeywords}
Constrained Markov Decision Process, Gradient Aware Search, Lagrangian Primal-Dual Optimization, Piecewise Linear Convex, Wireless Network Management
\end{IEEEkeywords}

\IEEEpeerreviewmaketitle

\section{Introduction}

\IEEEPARstart{M}{arkov} decision processes (MDPs) are classical formalization of sequential decision making in discrete-time stochastic control processes \cite{sutton2018reinforcement}. In MDPs, the outcomes of actions are uncertain, and influence not only immediate rewards, but also future rewards through next states. Policies, which are strategies for action selection, should therefore strike a trade-off between immediate rewards and delayed rewards, and be optimal in some sense. MDPs have gained recognition in diverse fields such as operations research, economics, engineering, wireless networks, artificial intelligence, and learning systems \cite{kallenberg2011markov}. Moreover, an MDP is a mathematically idealized form of the reinforcement learning problem, which is an active research field within the machine learning community. In many situations however, finding the optimal policies with respect to a single reward function 
does not suffice to fully describe sequential decision making in problems with multiple conflicting objectives \cite{mannor2004geometric,van2014multi}.  
The framework of constrained MDPs (CMDPs) is the natural approach for handling multi-objective decision making under uncertainty \cite{altman1999constrained}.

Algorithmic methods for solving 
CMDPs have been extensively studied when the underlying transition probability function is known \cite{altman1999constrained,dolgov2005stationary,zadorojniy2006robustness,piunovskiy2000constrained,chen2004dynamic,piunovskiy2006dynamic,chen2007non}, and unknown \cite{geibel2005risk,geibel2012learning,borkar2005actor,abad2002self,bhatnagar2010actor,bhatnagar2012online,tessler2018reward,liang2018accelerated,fu2018risk,achiam2017constrained,prashanth2014actor,paternain2019safe,ding2020provably,raybenchmarking,khairy2020constrained}. In the case of finite state-action spaces with known transition probability function, the solution for a CMDP can be obtained by solving finite linear programs \cite{altman1999constrained,zadorojniy2006robustness,dolgov2005stationary}, or by deriving a Bellman optimality equation with respect to an augmented MDP whose state vector consists of two parts: the first part is the state of the original MDP, while the second part keeps track of the cumulative constraints cost \cite{piunovskiy2000constrained, piunovskiy2006dynamic,chen2004dynamic,chen2007non}. Linear programming methods become computationally impractical at a much smaller number of states than dynamic programming, by an estimated factor of about 100 \cite{sutton2018reinforcement}. In practice, MDP-specific algorithms, which are dynamic programming based methods, hold more promise for efficient solution \cite{littman2013complexity}. While MDP augmentation based methods provide a theoretical framework to apply dynamic programming to constrained stochastic control, they introduce continuous variables to the state space which rules out practical tabular methods and make the design of a solution algorithm challenging \cite{chow2015risk}.

On the other hand, solution methods for the constrained reinforcement learning problem, i.e., CMDPs with unknown transition probability, are generally based on Lagrangian primal-dual type optimization. In these methods, gradient-ascent is performed on state-values at a fast time scale to find the optimal value function for a given set of Lagrangian multipliers, while gradient-descent is performed on the Lagrangian multipliers at a slower time scale. This process is repeated until convergence to a saddle point. Existing works have explored the primal-dual optimization approach in the tabular setting, i.e., without function approximation
\cite{geibel2005risk,geibel2012learning,borkar2005actor,abad2002self}, and with function approximators such as deep neural networks \cite{bhatnagar2010actor,bhatnagar2012online,prashanth2014actor,tessler2018reward,liang2018accelerated,fu2018risk,achiam2017constrained,raybenchmarking,ding2020provably,paternain2019safe,khairy2020constrained}. While this approach is appealing in its simplicity, it suffers from the typical problems of gradient optimization on non-smooth objectives, sensitivity to the initialization of the Lagrange multipliers, and convergence dependency on the learning rate sequence \cite{chow2017risk, achiam2017constrained,liu2019ipo, liang2018accelerated}. If the learning rate is too small, the Lagrange multipliers will not update quickly to enforce the constraint; and if it is too high, the algorithm may oscillate around the optimal solution. In practice, a sequence of decreasing learning rates should be adopted to guarantee convergence \cite{borkar2005actor}, yet we do not have an obvious method to determine the optimal sequence, nor we can assess the quality of a solution in cases where the objective is not differentiable at the optimal solution.

In this brief, we develop a new approach to solve finite CMDPs with discrete state-action space and known probability transition function. We first prove that the optimization objective in the dual linear program of a finite CMDP is a piece-wise linear convex function (PWLC) with respect to the Lagrange penalty multipliers. Next, we treat the dual linear program of a finite CMDP as a search problem over the Lagrange penalty multipliers, and propose a novel two-level Gradient-Aware Search (GAS) algorithm, which exploits the PWLC structure to find the optimal state-value function and Lagrange penalty multipliers of a CMDP. We empirically compare the convergence performance of the proposed GAS algorithm with binary search (BS), Lagrangian primal-dual optimization (PDO), and Linear Programming (LP), in two application domains, robot navigation in grid world and wireless network management. Compared with benchmark algorithms, we show that the proposed GAS algorithm converges to the optimal solution faster, does not require hyper-parameter tuning, and is not sensitive to initialization of the Lagrange penalty multiplier. 

The  remainder  of  this  paper  is  organized  as follows.  A  background of unconstrained  and constrained  MDPs  is  given  in Section II. Our proposed Gradient-Aware Search (GAS) algorithm is proposed in Section III. Performance evaluation of GAS in two application domains is presented in Section IV, followed by our concluding remarks and future work in Section V. 

\section{Background and Related Works}
\subsection{Unconstrained Markov Decision Processes}
An infinite horizon Markov Decision Process (MDP) with discounted-returns is defined as a tuple $(\mathcal{S},\mathcal{A},\mathcal{P}, \vect{\beta},\mathcal{R},\gamma)$,  where $\mathcal{S}$ and $\mathcal{A}$ are finite sets of states and actions, respectively, $\mathcal{P}:\mathcal{S}\times \mathcal{A} \times \mathcal{S} \rightarrow [0,1]$ is  the model's state-action-state transition probabilities, and $\vect{\beta}:\mathcal{S} \rightarrow [0,1]$ is the initial distribution over the states, $\mathcal{R}: \mathcal{S} \times \mathcal{A} \rightarrow \mathbb{R}$, is the reward function which maps every state-action pair to the set of real numbers $\mathbb{R}$, and $\gamma$ is the discount factor. Denote the transition probability from state $s_t=i$ to state $s_{t+1}=j$ if action $a_t=a$ is chosen by $P_{ij}(a) := P(s_{t+1} = j | s_t = i, a_t = a)$. The transition probability from state $i$ to state $j$ is therefore, $p_{ij} = P(s_{t+1} = j | s_t = i) = \sum_a P_{ij}(a) \pi(a|i)$, where $\pi(a|i)$ is the adopted policy. The state-value function of state $i$ under policy $\pi$, $V_\pi(i)$, is the expected discounted return starting in state $i$ and following $\pi$ thereafter,
\begin{equation} \label{stateValue}
\resizebox{1\hsize}{!}{$
V_\pi(i) = \sum_{t=1}^\infty  \sum_{j, a} \gamma^{t-1} P^\pi(s_t = j, a_t = a | s_0 = i) \mathcal{R}(j,a), \forall i \in \mathcal{S}$.}
\end{equation}

Let $\vect{V}_\pi$ be the vector of state values, $V_\pi(i), \forall i$. The solution of an MDP is a Markov stationary policy $\pi^*$ which maximizes the inner product $\langle  \vect{V}_\pi, \vect{\beta} \rangle = \sum_i V_\pi(i) \beta(i)$, i.e., 
\begin{equation}\label{eq:sys1}
\underset{\pi}{\text{max}}~~ \sum_{t=1}^\infty \sum_{j, a} \gamma^{t-1} P^\pi(s_t = j, a_t = a) \mathcal{R}(j, a).
\end{equation}
There exist several methods to solve \eqref{eq:sys1}, including linear programming \cite{kallenberg2011markov} and dynamic programming methods such as value iteration and policy iteration \cite{sutton2018reinforcement}. Based on the linear programming formulation, the optimal $\vect{V}_{\pi^*}$ 
can be obtained by solving the following primal linear program \cite{kallenberg2011markov},
\begin{equation}\label{eq:primal1}
\begin{aligned}
\underset{V(i)}{\text{min}} ~~~~~~ &\sum_{\forall i} \beta(i)V(i), \\
&V(i) \geq \mathcal{R}(i,a) + \gamma \sum_j P_{ij}(a)V(j), \forall (i ,a) \in \mathcal{S} \times \mathcal{A}.
\end{aligned}
\end{equation}
Linear program \eqref{eq:primal1} has $|\mathcal{S}|$ variables and $|\mathcal{S}| \times |\mathcal{A}|$ constraints, which becomes computationally impractical to solve for MDPs with large state-action spaces. On the contrary, dynamic programming is comparatively better suited to handling large state spaces than linear programming, and is widely considered the only feasible way of solving \eqref{eq:sys1} for large MDPs \cite{sutton2018reinforcement}. Of particular interest is the value iteration algorithm which generates an optimal policy in polynomial time for a fixed $\gamma$ \cite{littman2013complexity}. In the value iteration algorithm, the state-value function is iteratively updated based on Bellman's principle of optimality,
\begin{equation} \label{valIter}
V_\pi^{k+1} (i) =\underset{a \in \mathcal{A}(i)}{\text{max}} [ \mathcal{R} (i, a) + \gamma \sum_{j \in \mathcal{S}} P_{ij}(a) V_\pi^k (j) ] , \forall i \in \mathcal{S}.
\end{equation}
It has been shown that the non-linear map $\vect{V}_\pi \rightarrow \vect{V}_\pi$ in \eqref{valIter} is a  monotone contraction mapping that admits a unique fixed point, which is the optimal value function $\vect{V}_{\pi^*}$ \cite{kallenberg2011markov}. By obtaining the optimal value function, the optimal policy can be derived using one-step look-ahead: the optimal action at state $i$ is the one that  attains the equality in \eqref{valIter}, with ties broken arbitrarily.

\subsection{Constrained Markov Decision Processes}
In constrained MDPs (CMDPs), an additional immediate cost function $\mathcal{C}: \mathcal{S} \times  \mathcal{A} \rightarrow R$ is augmented, such that a CMDP is defined by the tuple $(\mathcal{S},\mathcal{A},\mathcal{P}, \vect{\beta},\mathcal{R},\mathcal{C}, \gamma)$ \cite{altman1999constrained}.  The state-value function is defined as in \eqref{stateValue}. In addition, the infinite-horizon discounted-cost of a state $i$ under policy $\pi$ is defined as,
\begin{equation}
\resizebox{1\hsize}{!}{$
C_\pi(i) = \sum_{t=1}^\infty \sum_{j, a} \gamma^{t-1}  P^\pi(S_t = j, A_t = a | S_o = i) \mathcal{C}(j, a), \forall i \in \mathcal{S} $.}
\end{equation}
Let $\vect{C}_\pi$ be the vector of state costs, $C_\pi(i), \forall i$, and $E \in \mathbb{R}$ a given constant which represents the constraint upper-bound. The solution of a CMDP is a Markov stationary policy $\pi^*$ which maximizes $\langle \vect{V}_\pi, \vect{\beta} \rangle$  subject to a constraint $\langle \vect{C}_\pi, \vect{\beta} \rangle \leq E$, 
\begin{equation}\label{eq:system}
\begin{aligned}
\underset{\pi}{\text{max}} ~~~~~~ &\sum_{t=1}^\infty \sum_{j, a} \gamma^{t-1}  P^\pi(s_t = j, a_t = a) \mathcal{R}(j, a), \\
& \sum_{t=1}^\infty  \sum_{j, a} \gamma^{t-1} P^\pi(s_t = j, a_t = a) \mathcal{C}(j, a) \leq E.
\end{aligned}
\end{equation}

The  canonical  solution  methodology  for \eqref{eq:system} is based  on  convex linear  programming \cite{altman1999constrained}. Of interest is the following dual linear program\footnote{The primal linear program is defined over a convex set of occupancy measures, and by the Lagrangian strong duality, \eqref{eq:system} can be obtained. Interested readers can find more details about this formulation in Chapter 3 \cite{altman1999constrained}.} which can be solved for the optimal state-value function and Lagrange penalty multiplier $\mu$,
\begin{equation}\label{eq:primal2}
\begin{aligned}
\underset{V(i), \mu \geq 0}{\text{min}} ~~~~~~ &\sum_{\forall i} \beta(i)V(i) + \mu E, \\
&V(i) \geq \mathcal{R}(i,a) - \mu \mathcal{C}(i,a) + \gamma \sum_j P_{ij}(a)V(j), \\
&\forall (i ,a) \in \mathcal{S} \times \mathcal{A}.
\end{aligned}
\end{equation}
Note that for a fixed Lagrangian penalty multiplier 
$\mu$, \eqref{eq:primal2} is analogous to \eqref{eq:primal1}, which can be solved using dynamic programming and Bellman's optimality equation \cite{altman1999constrained},
\begin{equation} \label{opt2}
\resizebox{1\hsize}{!}{$
V_{\pi^*} (i,\mu) =\underset{a \in \mathcal{A}(i)}{\text{max}} [ \mathcal{R}(i, a) - \mu \mathcal{C}(i,a) + \gamma \sum_{j \in \mathcal{S}} P_{ij}(a) V_{\pi^*} (j,\mu) ] , \forall i \in \mathcal{S}.$}
\end{equation}

This formulation has led to the development of multi time-scale stochastic approximation based Lagrangian primal-dual type learning algorithms to solve CMDPs with and without value function approximation \cite{geibel2005risk,geibel2012learning,borkar2005actor,abad2002self,bhatnagar2010actor,bhatnagar2012online,prashanth2014actor,tessler2018reward,liang2018accelerated,fu2018risk,achiam2017constrained,raybenchmarking,ding2020provably,paternain2019safe,khairy2020constrained}. In such algorithms, $\mu$ is updated by gradient descent at a slow time-scale, while the state-value function is optimized on a faster time-scale using dynamic programming (model-based) or gradient-ascent (model-free), until an optimal saddle point $(\vect{V}_{\pi^*}(\mu^*),\mu^*)$ is reached. These approaches however suffer from the typical problems of gradient optimization on non-smooth objectives, sensitivity to the initialization of $\mu$, and convergence dependency on the learning rate sequence used to update $\mu$ \cite{chow2017risk, achiam2017constrained,liu2019ipo, liang2018accelerated}. As we will show in the next section, the optimization objective in \eqref{eq:primal2} is piecewise linear convex with respect to $\mu$, and so it is not differentiable at the cusps. This means that we cannot assess the optimality of a solution because $\mu^*$ usually does not have a zero gradient. This also means that convergence to $\mu^*$ is dependent on specifying an optimal learning rate sequence, which is by itself a very challenging task. Unlike existing methods, our proposed algorithm explicitly tackles these problems by exploiting problem structure, and so it does not require hyper-parameter tuning or specifying an optimal learning rate sequence.

\section{A Novel CMDP Framework}
In this section, we describe our novel methodology to solve finite CMDPs with discrete state-action space and known probability transition function. For ease of notation, we hereby consider the case of one constraint, yet our proposed algorithm can be readily extended to the multi-constraint case by successively iterating the method described hereafter over each Lagrange penalty multiplier, one at a time. 

Our proposed method works in an iterative two-level optimization scheme.  For a given Lagrange penalty multiplier $\mu$, an unconstrained MDP with a penalized reward function $\hat{\mathcal{R}}(i,a)=\mathcal{R}(i, a) - \mu \mathcal{C}(i,a), \forall i,a$, is specified, and we require the corresponding optimal value function $V_{\pi^*}(i,\mu), \forall i$ to be found using dynamic programming \eqref{opt2}. Denote the optimization objective in \eqref{eq:primal2} as a function of $\mu$ by $\mathcal{O}(\mu)$, i.e., $\mathcal{O}(\mu) = \sum_{\forall i} \beta(i)V_{\pi^*}(i,\mu) + \mu E$. Thus to evaluate $\mathcal{O}(\mu)$, one has to solve for $V_{\pi^*}(i,\mu), \forall i$ first. How to efficiently search for the optimal $\mu^*$? To answer this question, we first prove that $\mathcal{O}(\mu)$ is a piecewise linear convex function with respect to $\mu$, and design an efficient solution algorithm next.  
\begin{definition}
Let $a_1^i, \cdots, a_m^i \in \mathbb{R}$, and $b_1^i, \cdots, b_m^i \in \mathbb{R}$ for some positive integer $i \in \mathbb{Z}^+$. A Piecewise Linear\footnote{Affine is more accurate but less common.} Convex function (PWLC) is given by $f_i(\mu)=\text{max}_{l=1,\cdots,m}~\big\{a_l^i \mu + b_l^i \big\}$.
\end{definition}
\begin{theorem}
Given that $V_{\pi^*}(i,\mu), \forall i$ is the optimal value function
of an unconstrained MDP with a penalized reward function $\hat{\mathcal{R}}(i,a)=\mathcal{R}(i, a) - \mu \mathcal{C}(i,a), \forall i,a$, $\beta(i)$ is the probability that the initial state is $i$, and $E$ is a constant representing the cost constraint upper bound in the original CMDP, $\mathcal{O}(\mu) = \sum_{\forall i} \beta(i)V_{\pi^*}(i,\mu) + \mu E$ is a PWLC function with respect to $\mu$.
\end{theorem}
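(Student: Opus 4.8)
The plan is to exhibit $\mathcal{O}(\mu)$ as a pointwise maximum of finitely many affine functions of $\mu$, which is exactly the form in Definition~1. The starting point is that, since $\mathcal{S}$ and $\mathcal{A}$ are finite, there are only finitely many deterministic stationary policies; enumerate them $\pi_1,\dots,\pi_m$ with $m \le |\mathcal{A}|^{|\mathcal{S}|}$. For the unconstrained MDP with penalized reward $\hat{\mathcal{R}}(i,a) = \mathcal{R}(i,a) - \mu\mathcal{C}(i,a)$, it is standard (and is what underlies \eqref{opt2}) that the optimum is attained by a deterministic stationary policy which, moreover, is simultaneously optimal for every initial state: the greedy policy with respect to $V_{\pi^*}(\cdot,\mu)$ achieves $V_{\pi^*}(i,\mu)$ at every $i$ by the one-step look-ahead argument. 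Hence there is a common index $l^*(\mu)$ with $V_{\pi^*}(i,\mu) = \max_{l=1,\dots,m} V_{\pi_l}(i,\mu) = V_{\pi_{l^*(\mu)}}(i,\mu)$ for all $i$.

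First I would check that for each fixed $\pi_l$ the map $\mu \mapsto V_{\pi_l}(i,\mu)$ is affine. Writing $P^{\pi_l}$ for the induced transition matrix and $\vect{R}^{\pi_l},\vect{C}^{\pi_l}$ for the induced reward and cost vectors, the policy-evaluation fixed point reads $\vect{V}_{\pi_l}(\mu) = (\vect{R}^{\pi_l} - \mu\vect{C}^{\pi_l}) + \gamma P^{\pi_l}\vect{V}_{\pi_l}(\mu)$, so $\vect{V}_{\pi_l}(\mu) = (I - \gamma P^{\pi_l})^{-1}(\vect{R}^{\pi_l} - \mu\vect{C}^{\pi_l})$, the inverse existing because $\gamma < 1$. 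Therefore $\vect{V}_{\pi_l}(\mu) = \vect{V}^R_{\pi_l} - \mu\vect{V}^C_{\pi_l}$ for fixed vectors $\vect{V}^R_{\pi_l}, \vect{V}^C_{\pi_l}$, so each $V_{\pi_l}(i,\mu)$ is affine in $\mu$, and consequently $\sum_i \beta(i)V_{\pi_l}(i,\mu) + \mu E = a_l\mu + b_l$ with $a_l = E - \sum_i \beta(i)V^C_{\pi_l}(i)$ and $b_l = \sum_i \beta(i)V^R_{\pi_l}(i)$.

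Next I would combine the two facts. Because a single policy $\pi_{l^*(\mu)}$ attains the pointwise maximum over all states at that $\mu$, we get $\sum_i \beta(i)V_{\pi^*}(i,\mu) = \max_{l=1,\dots,m}\sum_i \beta(i)V_{\pi_l}(i,\mu)$: the inequality ``$\ge$'' holds for every $l$ since $V_{\pi^*}(i,\mu) \ge V_{\pi_l}(i,\mu)$ termwise, and equality is realized by $l = l^*(\mu)$. Adding $\mu E$ to both sides gives $\mathcal{O}(\mu) = \max_{l=1,\dots,m}\{a_l\mu + b_l\}$, which is precisely Definition~1, so $\mathcal{O}$ is PWLC; convexity comes for free as a maximum of affine functions.

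The step that needs the most care is the interchange $\sum_i \beta(i)\max_l V_{\pi_l}(i,\mu) = \max_l \sum_i \beta(i)V_{\pi_l}(i,\mu)$, which fails for a generic family of functions and here rests specifically on the MDP fact that the maximizer over deterministic policies can be chosen \emph{independently of the state} $i$. I would therefore invoke this explicitly (via \eqref{opt2} and the one-step look-ahead / policy-improvement argument), since the PWLC conclusion hinges on it; by contrast, the affine decomposition of each $V_{\pi_l}(\cdot,\mu)$ is routine linear algebra once invertibility of $I - \gamma P^{\pi_l}$ is noted.
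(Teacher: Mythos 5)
Your proof is correct, but it takes a genuinely different route from the paper's. The paper first establishes three closure properties of PWLC functions (closure under addition, under a max combined with an affine offset, and under adding a linear term), and then argues that $V_{\pi^*}(i,\mu)$ is PWLC by repeatedly unrolling the Bellman recursion \eqref{opt2} into a nested max of affine functions of $\mu$; PWLC-ness of $\mathcal{O}(\mu)$ then follows from the closure properties. You instead enumerate the finitely many deterministic stationary policies, show each policy value $\vect{V}_{\pi_l}(\mu) = (I-\gamma P^{\pi_l})^{-1}(\vect{R}^{\pi_l}-\mu \vect{C}^{\pi_l})$ is affine in $\mu$ by policy evaluation, and write $\mathcal{O}(\mu)$ directly as a maximum over this finite family, justifying the exchange of $\sum_i \beta(i)$ and $\max_l$ by the existence of a single policy that is simultaneously optimal for every initial state. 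Your route buys something the paper's argument glosses over: the infinite-horizon unrolling in the paper produces, taken literally, a supremum of infinitely many affine functions, so the finiteness of the number of linear pieces (which Definition~1 requires) is not actually established there; your finite policy enumeration makes that finiteness explicit and rigorous. What the paper's approach buys in exchange is locality and constructiveness --- the closure properties track how the slopes $\omega^1$ accumulate through value iteration, which is exactly the bookkeeping Algorithm~1 performs in its inner loop --- so it connects the theorem more directly to the gradient computation in \eqref{lagrange2}. You are also right to flag the sum--max interchange as the step needing care; that is precisely the point where a generic family of convex functions would not yield the max-of-affine form with a common index, and the state-independence of the optimal policy is the MDP-specific fact that rescues it.
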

\begin{proof}
For ease of exposition, we introduce the following three properties of PWLC functions, and show how they can be used to construct the proof, 
\begin{enumerate}
\item  $f_i(\mu)+f_j(\mu)$ \textit{is PWLC for any} $i,j \in \mathbb{Z}^+$. Notice that for point-wise maximum,  $\text{max}_l \{a_l\} + \text{max}_k \{a_k\} = \text{max}_{l,k}\{a_l+a_k\}$. Hence, 
$f_i(\mu)+f_j(\mu)$ $=\text{max}_{l,k}~\big\{ (a_l^i+a_k^j) \mu + (b_l^i+b_k^j) \big\}$, which is PWLC.
\item $\text{max}_l~\big\{a_l^i \mu + b_l^i + f_j(\mu) \big\}$ \textit{is PWLC for any} $i,j \in \mathbb{Z}^+$. Notice that $\text{max}_l \Big\{ a_l + \text{max}_k \{a_k\} \Big\}$  
$=\text{max}_l \{a_l\} + \text{max}_k \{a_k\}$
$= \text{max}_{l,k}\{a_l+a_k\}$. Hence,
$\text{max}_l~\big\{a_l^i \mu + b_l^i + f_j(\mu) \big\}$ $=\text{max}_l~\Big\{a_l^i \mu + b_l^i + \text{max}_k~\big\{a_k^j \mu + b_k^j \} \Big\}$ $=\text{max}_{l,k}~\big\{ (a_l^i+a_k^j) \mu + (b_l^i+b_k^j) \big\}$, which is PWLC.
\item $f_i(\mu) + c\mu$ \textit{is PWLC}. Notice that $f_i(\mu) + c\mu = \text{max}_l~\big\{a_l^i \mu + b_l^i \big\}+c\mu$ $=\text{max}_l~\big\{(a_l^i+c) \mu + b_l^i \big\}$, which is PWLC.
\end{enumerate}
Based on these properties, the proof proceeds as follows. Given $V_{\pi^*} (i,\mu)$, the recursive formula in \eqref{opt2} is expanded by iteratively substituting for $V_{\pi^*} (i,\mu)$ with the right hand side of \eqref{opt2}. This expansion allows us to deduce that $V_{\pi^*} (i,\mu)$ has the following functional form, 
\begin{equation} \nonumber
\resizebox{1\hsize}{!}{$
V_{\pi^*} (i,\mu) =\text{max}_l~\Bigg\{a_l^i \mu + b_l^i + \gamma \text{max}_k~\Big\{ a_k^{i^\prime} \mu + b_k^{i^\prime}  + \gamma \text{max}_m~\big\{ a_m^{i^{\prime \prime }} \mu + b_m^{i^{\prime \prime}} + \cdots \big\} \Big\} \Bigg\} $},
\end{equation}
for some constants $a_l^i, b_l^i,a_k^{i^\prime},b_k^{i^\prime}, \cdots \in \mathbb{R}$ which come from discounted rewards, costs, and their product with the transition probability function. Based on successive application of properties 1 and 2, $V_{\pi^*} (i,\mu), \forall i$ are PWLC functions with respect to $\mu$. Also,  $\sum_{\forall i} \beta(i)V_{\pi^*} (i,\mu)$ is PWLC with respect to $\mu$ based on property 1. Finally, $\mathcal{O}(\mu) = \sum_{\forall i} \beta(i)V_{\pi^*}(i,\mu) + \mu E$ is a PWLC function with respect to $\mu$ based on property 3. 
\end{proof}

To efficiently find $\mu^*$ and solve \eqref{eq:primal2}, we propose a novel Gradient-Aware Search (GAS) algorithm which exploits the piecewise linear convex structure. Our proposed GAS algorithm, which is outlined in Algorithm 1, operates over two-loops. For a given Lagrange multiplier $\mu^\times$ (line 6) at an outer loop iteration, the inner loop finds the optimal state-value function, $V_{\pi^*}(i,\mu^\times), \forall i$ using value iteration, as well as the gradient components of $\mathcal{O}(\mu)$ with respect to  $\mu^\times$ (lines 7-16). Based on this information, the outer loop exploits the PWLC structure to calculate the next $\mu^\times$ (line 17, lines 20-25). This process continues until a termination criteria is met (lines 18-19). In what follows, the proposed GAS algorithm is outlined in detail. 

\begin{algorithm} \label{Alg1}
    \SetKwInOut{Input}{Input}
    \SetKwInOut{Output}{Output}
    \SetKwInOut{AlgP}{Algorithm Parameters}
    \Input{$\mathcal{R}(i,a),~\mathcal{C}(i,a),~P_{ij}(a)$, $\forall (i,a) \in \mathcal{S} \times \mathcal{A}$ }
    \Output{$V^*(i),~\forall i \in \mathcal{S}$, $\mu^*$}
    \AlgP{$\epsilon$, $\epsilon^\prime$}
    \textbf{Initialize:} $\mu^+=M$, $\mu^-=0$, $\mathcal{O}(\mu^+)$, $\mathcal{O}(\mu^-)$
    $\frac{\partial \mathcal{O}}{\partial \mu^+}$,$\frac{\partial \mathcal{O}}{\partial \mu^-},  \mathcal{O}_{\text{min}}=\infty$\\ 
\Repeat{forever}{    
    $\Delta=\infty$, $k=0$, $V_k(i) = 0,~\forall i \in \mathcal{S}$,\\
    $\omega_k^1(i,a) =0,~\forall (i,a) \in \mathcal{S} \times \mathcal{A} $\\
    $Q_k(i,a) =0,~\forall (i,a) \in \mathcal{S} \times \mathcal{A} $ \\
    Find the intersection $\mu^\times$ of the two lines defined by ($\mathcal{O}(\mu^+)$, $\frac{\partial \mathcal{O}}{\partial \mu^+}$) and ($\mathcal{O}(\mu^-)$, $\frac{\partial \mathcal{O}}{\partial \mu^-}$)\\
\Repeat{$\Delta < \epsilon$}
{
$Q_k(i,a,\mu^\times) = \mathcal{R} (i, a)- \mu^\times \mathcal{C}(i,a)$ \\
$~~~~~~~~~~~~~~+\gamma \sum_{j \in \mathcal{S}} P_{ij}(a) V_k (j)  ,~\forall (i,a) \in \mathcal{S} \times \mathcal{A}$
\\
$V_{k+1} (i,\mu^\times) =\underset{a \in \mathcal{A}(i)}{\text{max}} Q_k(i,a,\mu^\times), ~\forall i \in \mathcal{S}$\\
$ \tilde{a}(i) = \underset{a \in \mathcal{A}(i)}{\text{argmax}} ~Q(i,a,\mu^\times), ~\forall i \in \mathcal{S}$\\
$\omega_{k+1}^1(i,a) =\mathcal{C}(i,a)+\zeta \sum_{j \in \mathcal{S}} P_{ij}(a) \omega_{k}^1(j,\tilde{a}(j)) ,$
\\$~~~~~~~~~~~~~~~~~~~~~~~~~~~~~~~~~~~~~~~\forall (i,a) \in \mathcal{S} \times \mathcal{A} $\\  
$\Delta = \frac{1}{|\mathcal{S}|}\sum_i \frac{|V_{k+1}(i,\mu^\times) - V_k(i,\mu^\times)|}{|V_{k}(i,\mu^\times)|}$\\
$k = k + 1$\\
}
Calculate $\mathcal{O}(\mu^\times)$, $\frac{\partial \mathcal{O}}{\partial \mu^\times}$ based on \eqref{lagrange1} and \eqref{lagrange2}\\
\uIf{  $|\mathcal{O}_{\text{min}} -  \mathcal{O}(\mu^\times)|  \leq \epsilon^\prime $}
{
\textbf{Break}
}
\uIf{  $\mathcal{O}_{\text{min}} >  \mathcal{O}(\mu^\times)$}
{
$\mathcal{O}_{\text{min}} =  \mathcal{O}(\mu^\times)$
}
\uIf{  $0 \leq \frac{\partial \mathcal{O}}{\partial \mu^\times} < \frac{\partial \mathcal{O}}{\partial \mu^+}$ }
{ 
$\mu^+=\mu^\times$, $\frac{\partial \mathcal{O}}{\partial \mu^+} =\frac{\partial \mathcal{O}}{\partial \mu^\times}$
}

\uIf{  $\frac{\partial \mathcal{O}}{\partial \mu^-} < \frac{\partial \mathcal{O}}{\partial \mu^\times} < 0$ }
{ 
$\mu^-=\mu^\times$, $\frac{\partial \mathcal{O}}{\partial \mu^-} =\frac{\partial \mathcal{O}}{\partial \mu^\times}$
}
}
$\mu^* = \mu^+$ \\
$V^*(i) = V_{k+1}(i),~\forall i \in \mathcal{S}$
    \caption{Gradient-Aware Search CMDP Solver}
\end{algorithm}

\textbf{Inner loop:} The value iterations of the inner loop are sub-scripted with the index $k$. Let $Q(i,a,\mu), \forall i,a$, denote the state-action value function  which is the expected discounted return starting in state $i$, taking action $a$, and following $\pi$ thereafter, for a fixed $\mu$. Note that we drop the subscript $\pi$ to avoid notation clutter, but dependence on $\pi$ remains unchanged. The state-action value function at each step $k$ can be represented as a linear function of $\mu^\times$, 
\begin{equation}
   Q_k(i,a,\mu^\times) = \omega^0_k(i,a) - \omega^1_k(i,a) \mu^\times.
\end{equation}
In this representation, accumulated discounted rewards contribute to $\omega^0_k(i,a)$ and accumulated discounted costs contribute to $\omega^1_k(i,a)$ (line 12). Given $\omega^0_k(i,a),~\omega^1_k(i,a),$ $\forall a$, and $\mu^\times$, the value function at state $i$ can be obtained by $V_k(i, \mu^\times) = \underset{a}{\text{max}}~Q_k(i,a, \mu^\times) =\omega^0_k(i,\tilde{a}) - \omega^1_k(i,\tilde{a}) \mu^\times $ (line 10), where $\tilde{a}$ denotes the action which attains the maximum state-value among the set of actions at state $i$, $\tilde{a}(i) = \underset{a}{\text{argmax}}~Q_k(i,a,\mu^\times)$ (line 11). Notice that in the inner loop, two functions are estimated using value iterations, the total state-action value function (line 8), and the state-action cost function (line 12). Dynamic programming steps in the inner loop are performed until the mean relative absolute error in the estimation of the state value function is less than a preset accuracy threshold $\epsilon$ (line 14-16). When the inner loop concludes, the optimal $V_{\pi^*}(i, \mu^\times),~ \omega^1_{\pi^*}(i,\tilde{a}), \forall i$, are returned to the outer loop. 

\textbf{Outer loop:} Given $\mu^\times$, and $V_{\pi^*}(i, \mu^\times),~ \omega^1_{\pi^*}(i,\tilde{a}), \forall i$ from the inner loop, $\mathcal{O}(\mu^\times)$ and the gradient of $\mathcal{O}(\mu^\times)$ with respect to $\mu^\times$, can be obtained based on (line 17), 
\begin{equation} \label{lagrange1}
\begin{aligned}
&\mathcal{O}(\mu^\times) = \sum_i V_{\pi^*}(i, \mu^\times)  \beta(i) + \mu^\times E,
\end{aligned}
\end{equation}
and,
\begin{equation} \label{lagrange2}
\begin{aligned}
\frac{\partial \mathcal{O}}{\partial \mu^\times} &= \frac{\partial}{\partial \mu^\times} \sum_i  V_{\pi^*}(i, \mu^\times)  \beta(i) + \mu^\times E
\\
&= \frac{\partial}{\partial \mu^\times} \sum_i  \underset{a}{\text{max}}~Q_{\pi^*}(i,a,\mu^\times)  \beta(i) + \mu^\times E
\\
&= \frac{\partial}{\partial \mu^\times} \sum_i \big( \omega^0_k(i,\tilde{a}) - \omega^1_k(i,\tilde{a}) \mu^\times \big)  \beta(i) + \mu^\times E
\\
&=  -\sum_i \omega^1_{\pi^*}(i,\tilde{a})  \beta(i) + E.
\end{aligned}
\end{equation}

When $\frac{\partial \mathcal{O}}{\partial \mu^\times}$ is negative, i.e., $\sum_i \omega^1_{\pi^*}(i,\tilde{a})  \beta(j) >  E$, the current policy $\pi^*(\mu^\times)$ is infeasible, which means $\mu^\times$ should be increased. On the other hand, when $\frac{\partial \mathcal{O}}{\partial \mu^\times}$ is non-negative, i.e., $\sum_i \omega^1_{\pi^*}(i,\tilde{a})  \beta(j) \leq  E$, the current policy is feasible, yet we can potentially obtain another feasible policy with larger returns by decreasing $\mu^\times$. Notice that if $\mu^\times$ is a cusp which lies at the intersection point of two linear segments of the PWLC function $\mathcal{O}(\mu)$,  $\frac{\partial \mathcal{O}}{\partial \mu^\times}$ does not exist in theory, although the two one-sided derivatives (left and right) are well-defined. The nonexistence of a derivative at cusps does not pose a challenge to our proposed algorithm, because computations are accurate up to the machine's floating point precision. To further clarify, suppose $\mu^\times$ is a cusp on the curve of $\mathcal{O}(\mu)$. In practice, we can only compute the  gradient \eqref{lagrange2} at $(\mu^\times \pm \hat{\epsilon})$, where $\hat{\epsilon}$ is a very small precision loss due to rounding in floating point arithmetic, which is at the order of $10^{-16}$ for double precision as defined by the IEEE 754-2008 standard. This means that computing the gradient at a point is essentially computing a one-sided derivative, which always exists.

It is worth to mention that our proposed algorithm does not perform gradient optimization (which suffers on non-smooth objectives) but rather exploits the structure of the PWLC objective to find the optimal $\mu^*$. The algorithm always retains two values for $\mu$, $\{\mu^-, \mu^+\}$, where $\frac{\partial \mathcal{O}}{\partial \mu^-} <0$ and $\frac{\partial \mathcal{O}}{\partial \mu^+} \geq 0$. $\mu^-$ along with   $\frac{\partial \mathcal{O}}{\partial \mu^-}$ define a line $L^-$, while $\mu^+$ along with   $\frac{\partial \mathcal{O}}{\partial \mu^+}$ define another line $L^+$. $L^-$ and $L^+$ intersect at a point $\mu^\times$, which is passed to the inner loop. There are two possible outcomes when running the inner loop with $\mu^\times$, 
\begin{enumerate}
    \item $0 \leq \frac{\partial \mathcal{O}}{\partial \mu^\times} < \frac{\partial \mathcal{O}}{\partial \mu^+}$, in this case $\mathcal{O}(\mu^\times) < \mathcal{O}(\mu^+)$ because $\mathcal{O}(\mu)$ is PWLC with respect to $\mu$. The algorithm retains $\mu^\times$ and $\frac{\partial \mathcal{O}}{\partial \mu^\times}$ by setting, $\mu^+=\mu^\times$ and $\frac{\partial \mathcal{O}}{\partial \mu^+} =\frac{\partial \mathcal{O}}{\partial \mu^\times}$ (lines 22-23).
    
    \item  $ \frac{\partial \mathcal{O}}{\partial \mu^-} < \frac{\partial \mathcal{O}}{\partial \mu^\times} < 0$, in this case $\mathcal{O}(\mu^\times) < \mathcal{O}(\mu^-)$ because $\mathcal{O}(\mu)$ is PWLC with respect to $\mu$. The algorithm retains $\mu^\times$ and $\frac{\partial \mathcal{O}}{\partial \mu^\times}$ by setting, $\mu^-=\mu^\times$ and $\frac{\partial \mathcal{O}}{\partial \mu^-} =\frac{\partial \mathcal{O}}{\partial \mu^\times}$ (lines 24-25).
\end{enumerate}
The algorithm terminates when the absolute error between  $\mathcal{O}_\text{min}$ which is the minimum objective value found so far, and $\mathcal{O}(\mu^\times)$, is less than an arbitrarily small positive number $\epsilon^\prime$ (lines 18-19). 
Note that the initial $\mu^+$ is set to $M$, an arbitrarily large number which attains $\frac{\partial \mathcal{O}}{\partial \mu^+} \geq 0$. If no such $M$ can be found, then \eqref{eq:primal2} is not feasible, i.e., $\mathcal{O}(\mu)$ is not lower bounded and can be made arbitrarily small $\mathcal{O}(\mu) \rightarrow -\infty$ by taking $\mu  \rightarrow \infty$. 
Figure \ref{illustration} below shows a visual illustration of how Algorithm 1 works in practice. It is worth to mention that Algorithm 1 only evaluates $\mathcal{O}(\mu)$ at the points indicated by a green circle.
\begin{figure}[h]  
    \centering
    \includegraphics[width=0.30\textwidth]{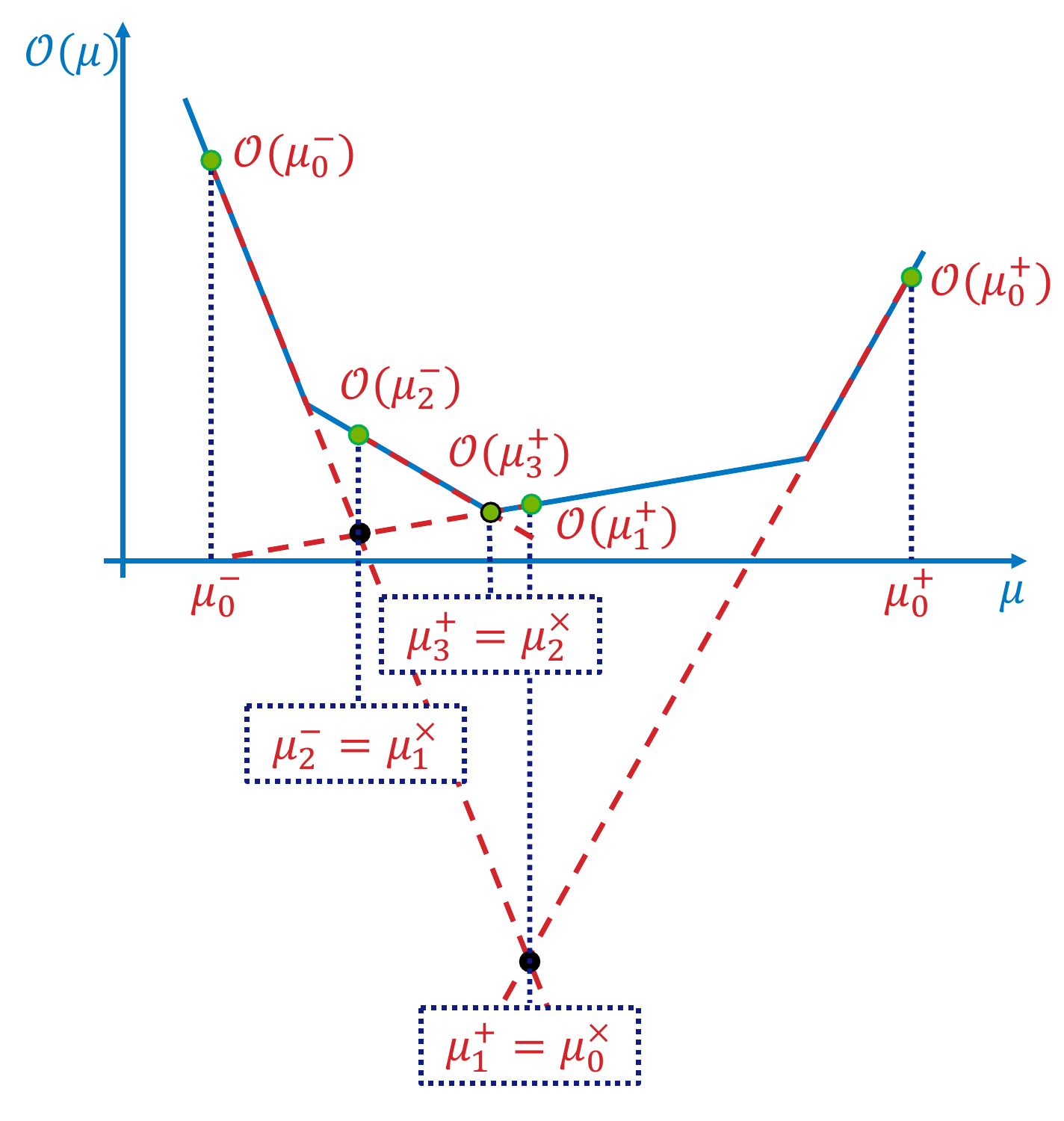}
     \caption{Visual illustration of Algorithm 1. 
     The proposed algorithm always retains two values for the Lagrange multiplier $\mu$, $\{\mu^-, \mu^+\}$, where $\frac{\partial \mathcal{O}}{\partial \mu^-} <0$ and $\frac{\partial \mathcal{O}}{\partial \mu^+} \geq 0$. $\mu^-$ along with   $\frac{\partial \mathcal{O}}{\partial \mu^-}$ define a line $L^-$, while $\mu^+$ along with   $\frac{\partial \mathcal{O}}{\partial \mu^+}$ define another line $L^+$. $L^-$ and $L^+$ intersect at a point $\mu^\times$, at which $\mathcal{O}(\mu^\times)$ is evaluated (green circle). Based on this evaluation, $\mu^-, \mu^+$ are updated and the process is repeated until the global minima is reached. In this example, $\mu_3^+=\mu_2^\times$ is the optimal $\mu^*$.}
     \label{illustration}
\end{figure}

\section{Case Studies}

In this section, we evaluate the efficacy of the proposed GAS algorithm 
in two different domains, robot navigation in a grid world, and solar-powered UAV-based wireless network management. In all the experiments, $\epsilon$ and $\epsilon^\prime$ are set to  $10^{-10}$ unless otherwise specified.

\subsection{Grid World Robot Navigation}

In the grid world robot navigation problem, the agent controls the movement of a robot in a rectangular grid world, where states represent grid points on a 2D terrain map. The robot starts in a safe region in the bottom right corner (blue square), and is required to travel to the goal located in the top right corner (green square), as can be seen from Figure \ref{gridvalue}. At each time step, the agent can move in either of four directions to one of the neighbouring states. However, the movement direction of the agent is stochastic and partially depends on the chosen direction. Specifically, with probability $1-\delta$, the robot will move in the chosen direction, and uniformly randomly otherwise, i.e., with probability $\delta/4$ the robot will move to one of the four neighboring states. At each time step, the agent receives a reward of $-1$ to account for fuel usage. Upon reaching the goal, the agent receives a  positive reward of $\hat{M} >> 1$. In between the starting and destination states, there are a number of obstacles (red squares) that the agent should avoid. Hitting an obstacle costs the agent $\hat{M}$. It is important to note that the 2D terrain map is built such that a shorter path induces higher risk of hitting obstacles. By maximizing long-term expected discounted rewards subject to long-term expected discounted costs, the agent finds the shortest path from the starting state to the destination state such that the toll for hitting obstacles does not exceed an upper bound. This problem is inspired by classic grid world problems in the literature \cite{tessler2018reward, chow2015risk}.

\subsubsection{Experiment Results} We choose a grid of $20 \times 20$ with a total of $400$ states. The start state is $(2,18)$, the destination is $(19,18)$,
$\hat{M}=2/(1-\gamma)$, $\delta=0.05$, $\gamma=0.99$, and $30$ obstacles are deployed. We follow \cite{chow2015risk} in the choice of parameters, which trades off high penalty for obstacle collisions and computational complexity. $\mathcal{O}(\mu)$ is plotted in Figure \ref{gridresults}(a) over $\mu \in [0,200]$. 
It can be seen from Figure \ref{gridresults}(a) that $\mathcal{O}(\mu)$ is a PWLC function as given by Theorem $1$. It can be also seen that the global minima is a non-differentiable cusp around $90$ with unequal one-sided derivatives. 

In Figure  \ref{gridresults}(b), we compare the convergence performance of the proposed GAS with binary search (BS), by plotting the number of outer loop iterations versus an accuracy level $\epsilon^\prime$. The initial search window for $\mu$ is $[0,M]$\footnote{$M$ is problem specific. The ability to choose an arbitrarily high value for $M$ without negatively impacting convergence time is desirable because it demonstrates the algorithm's capability to make adaptively large strides in the search space towards the optimal solution.}, and $M$ is either $10^3$ or $10^5$. Given a value for $\mu$, the inner loop evaluates \eqref{lagrange1} and its gradient \eqref{lagrange2}. The outer loop then determines the next $\mu$ either based on the proposed GAS or BS. This iterative procedures continues until the convergence criteria is met. Compared to BS, the proposed GAS algorithm requires a smaller number of outer loop iterations for all levels of accuracy thresholds $\epsilon^\prime$. This is because GAS adaptively shrinks the search window by evaluating points at the intersection of line segments with non-negative and negative gradients, whereas BS blindly shrinks the window size by $\frac{1}{2}$ every iteration. These results demonstrate that $\epsilon^\prime$ can be set arbitrarily small and $M$ can be set arbitrarily large without substantially impacting the convergence time. 

In Figure \ref{gridresults}(c) we compare the convergence performance of the proposed GAS with the Lagrangian primal-dual optimization (Lagrangian PDO), by plotting the total number of value iterations, which is the sum of the number of value iterations over all outer loop iterations, versus the learning rate decay parameter in a log-log scale. In the Lagrangian PDO, gradient descent on $\mu$ is performed along side dynamic programming iterations on $V(i),\forall i$, which motivates us to compare the convergence performance in terms of total number of value iterations. In Lagrangian PDO, the update rule for $\mu$ is $\mu_{k+1} = \mu_k - \kappa_k \frac{\partial \mathcal{O}}{\partial \mu_k}$, where $\kappa_k$ is the step size parameter. In our experiments, $\kappa_0$ is initially set to $1$ to speed up descending towards the minima, and is decayed according to $\kappa_{k+1} = \kappa_k e^{-\xi T}$, where $\xi$ is the learning rate decay parameter, and $T$ is the number of times $\frac{\partial \mathcal{O}}{\partial \mu_k}$ changes signs. While such choice of learning rate decay does not necessarily produce an optimal sequence, it serves to expose the sensitivity of gradient based optimization to the initial $\mu_0$ and decay parameter $\xi$. 
For every $\xi \in [10^{-4},1]$, $\mu_0$ is uniformly randomly initialized $100$ times from $[0,M]$, and the mean number of total value iterations is plotted in Figure \ref{gridresults}(c). It can be seen from Figure \ref{gridresults}(c) that the convergence of the Lagrangian PDO method is sensitive to the initialization of $\mu_0$ which can be inferred from the initial window size, and the decay rate parameter. For some $\xi$, Lagrangian PDO converges faster than GAS, but for other values it lags behind. In practice, $\xi$ is a hyper-parameter which requires tuning, consuming extra computational resources. In contrast, the proposed GAS algorithm works out of the box without the need to specify a learning rate sequence or tune any hyper-parameters. 

In Table \ref{LP}, we compare the results obtained by the proposed GAS algorithm at convergence with those obtained by solving \eqref{eq:primal2} using Linear Programming. We use the LP solver provided by Gurobi, which is arguably the most powerful mathematical optimization solver in the market. The optimal $\mu^*$, $\mathcal{O}(\mu^*)$, $\text{min}_i|BE(i)|$, $\mathbb{E}_i[|BE(i)|]$, and $\text{max}_i|BE(i)|$ are reported, where $BE(i)$ is the Bellman error at state $i$ given by, $BE(i) = V_{\pi^*} (i,\mu^*) -\underset{a \in \mathcal{A}(i)}{\text{max}} [ \mathcal{R}(i, a) - \mu \mathcal{C}(i,a) + \gamma \sum_{j \in \mathcal{S}} P_{ij}(a) V_{\pi^*} (j,\mu^*) ] , \forall i \in \mathcal{S}$. It can be observed that Gurobi's LP solver converges to a sub-optimal solution with a higher objective value compared with the proposed GAS algorithm (recall that \eqref{eq:primal2} is a minimization problem, hence the lower $\mathcal{O}(\mu)$ the better). This demonstrates that generic LP solvers may struggle to solve CMDPs, which has motivated the research community to develop CMDP specific algorithms.

\begin{table}[!htbp]
\centering
\caption{Performance comparison with LP}
\begin{tabular}{c | c | c| c|c}
\toprule
 &  \multicolumn{2}{c}{Grid World} & \multicolumn{2}{c}{Solar-Powered UAV-}\\
  &  \multicolumn{2}{c}{Robot Navigation} & \multicolumn{2}{c}{Based Wireless Network}\\
\midrule
{}   & LP (Gurobi)   & GAS   & LP (Gurobi)  & GAS \\
\hline
$\mu^*$  &  68.33333 & 90.08102 & 0.030786  & 0.030786\\
$\text{min}_i|BE(i)|$   &  0.0 & 3.11e-07  & 0.0  & 9.32e-09\\
$\mathbb{E}_i[|BE(i)|]$    &  9.89e-10 &  3.11e-07   & 0.001788  & 9.32e-09\\
$\text{max}_i|BE(i)|$    &  1.13e-07  &  3.11e-07   & 1.160397  & 9.33e-09\\
$\mathcal{O}(\mu^*)$    &  15216.83  &  15184.57  & 93.92830  & 93.92830\\
\bottomrule
\end{tabular} \label{LP}
\end{table}
\begin{figure*}[h]  
    \begin{minipage}[b]{.33\linewidth} 
     \includegraphics[width=1\textwidth]{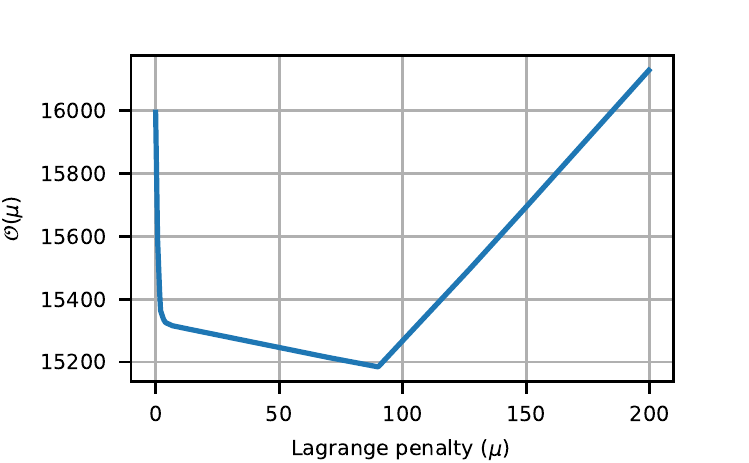}
     \subcaption{The piecewise linear convex function $\mathcal{O}(\mu)$}
   \end{minipage} 
   \begin{minipage}[b]{.33\linewidth}
     \includegraphics[width=1\textwidth]{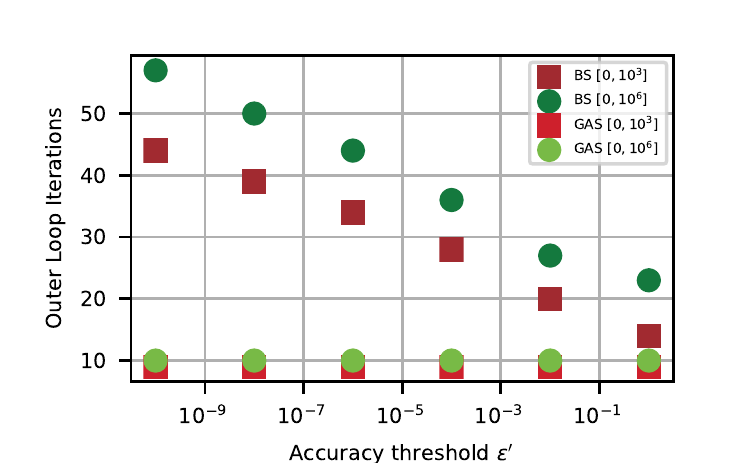}
     \subcaption{Comparison with Binary Search (BS)}
   \end{minipage} 
   \begin{minipage}[b]{.33\linewidth}
     \includegraphics[width=1\textwidth]{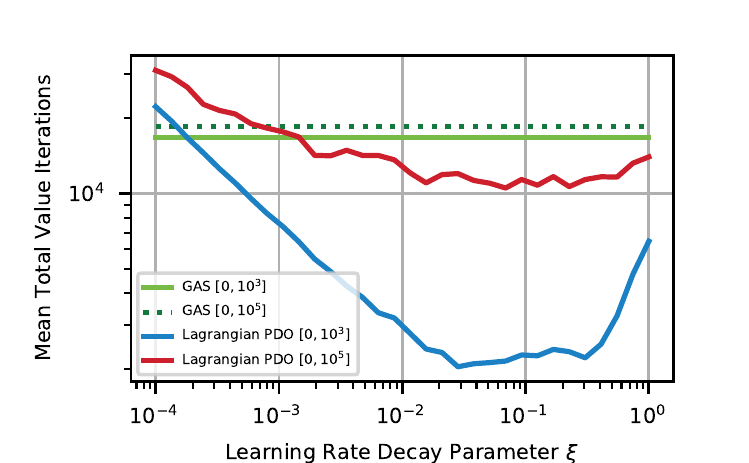}
     \subcaption{Comparison with Lagrangian PDO}
   \end{minipage}
 \caption{Performance comparison on the grid world robot navigation problem}
 \label{gridresults}
\end{figure*}

In Figure \ref{gridvalue}, filled contours maps for the value function are plotted. Regions which have the same value have the same color shade. The value function is plotted for four different values of the cost constraint upper bound, along with the start state (blue square), destination (green square), and obstacles (red squares). With a tighter cost constraint $(E=5)$, a safer policy is learned in which the agent takes a long path to go around the obstacles as in Figure \ref{gridvalue}(d), successfully reaching the destination $P_{\pi^*}^s=99.90\%$ of the times. When the cost constraint is loose $(E=160)$, a riskier policy is learned in which the agent takes a shorter path by navigating between the obstacles as in Figure \ref{gridvalue}(a), successfully reaching the destination  $P_{\pi^*}^s=76.85\%$ of the times, based on $2000$ roll outs. 
\begin{figure*}[ht]
   \begin{minipage}[b]{.245\linewidth}
     \centering
     \includegraphics[width=1\textwidth]{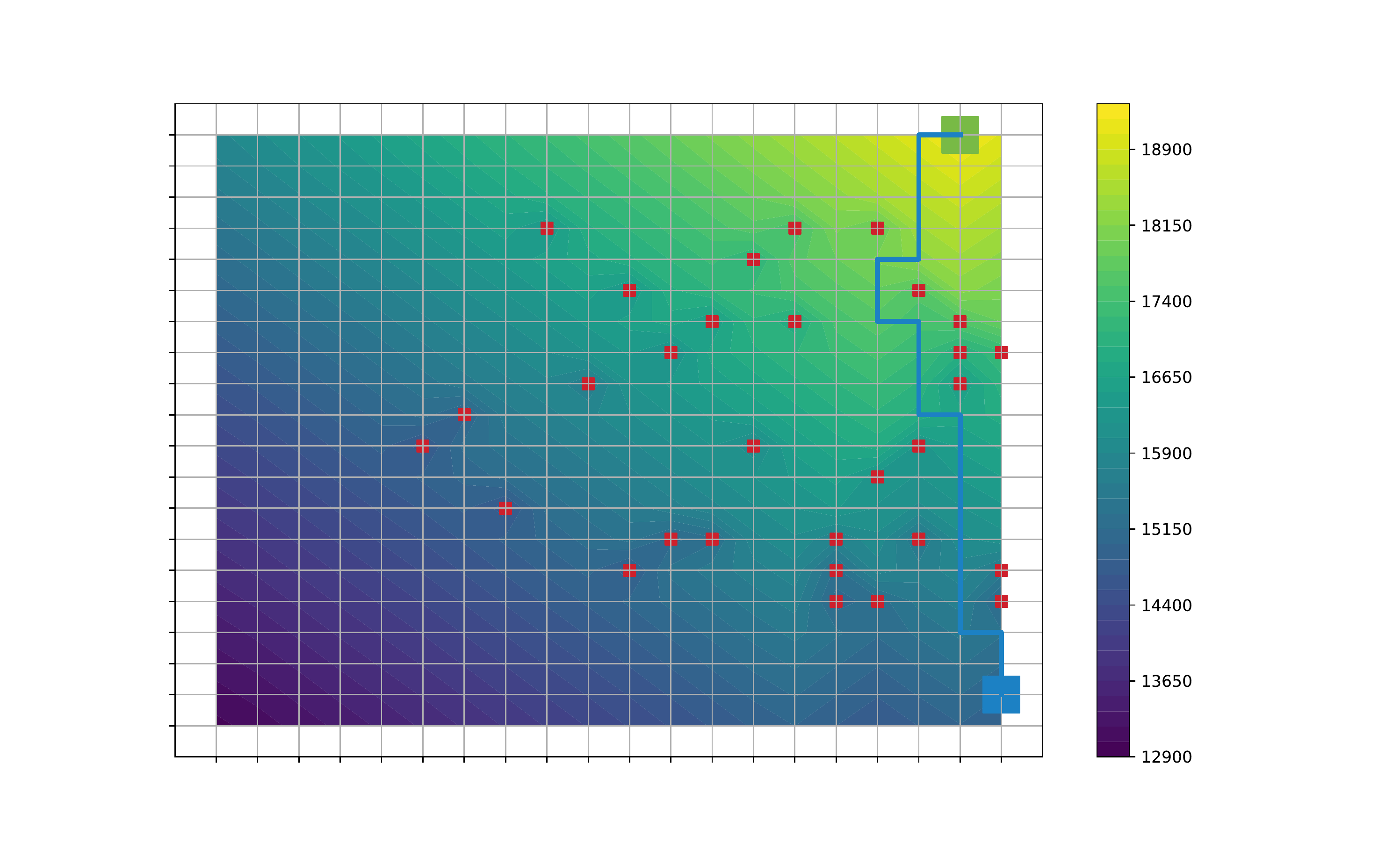}
     \subcaption{$P_{\pi^*}^s=76.85\%, E =160$}
   \end{minipage}
   \begin{minipage}[b]{.245\linewidth}
     \centering
     \includegraphics[width=1\textwidth]{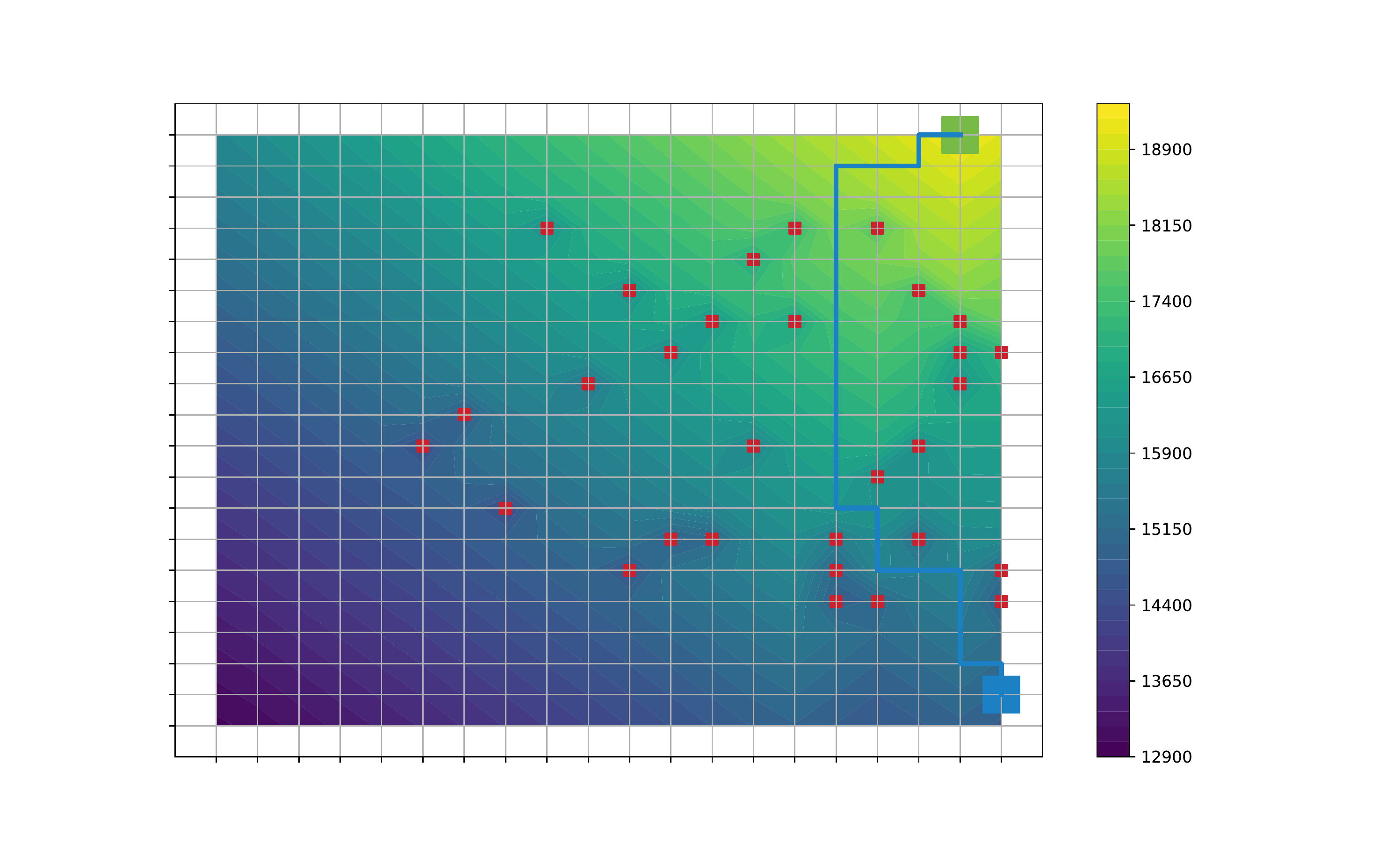}
     \subcaption{$P_{\pi^*}^s=79.65\%,E=40$}
   \end{minipage}
   \begin{minipage}[b]{.245\linewidth}
     \centering
     \includegraphics[width=1\textwidth]{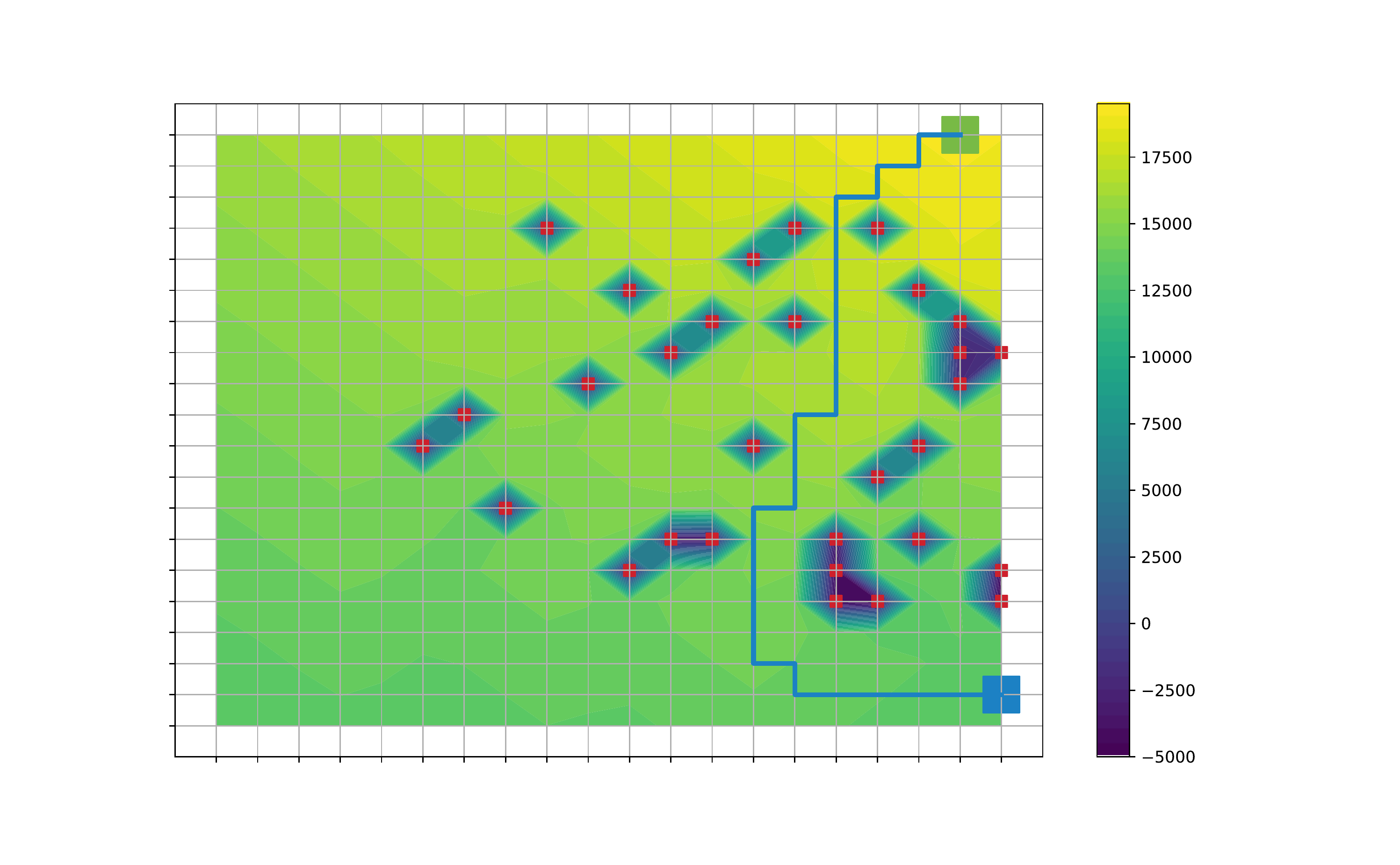}
     \subcaption{$P_{\pi^*}^s=93.55\%,E=20$}
   \end{minipage} 
   \begin{minipage}[b]{.245\linewidth}
     \centering
     \includegraphics[width=1\textwidth]{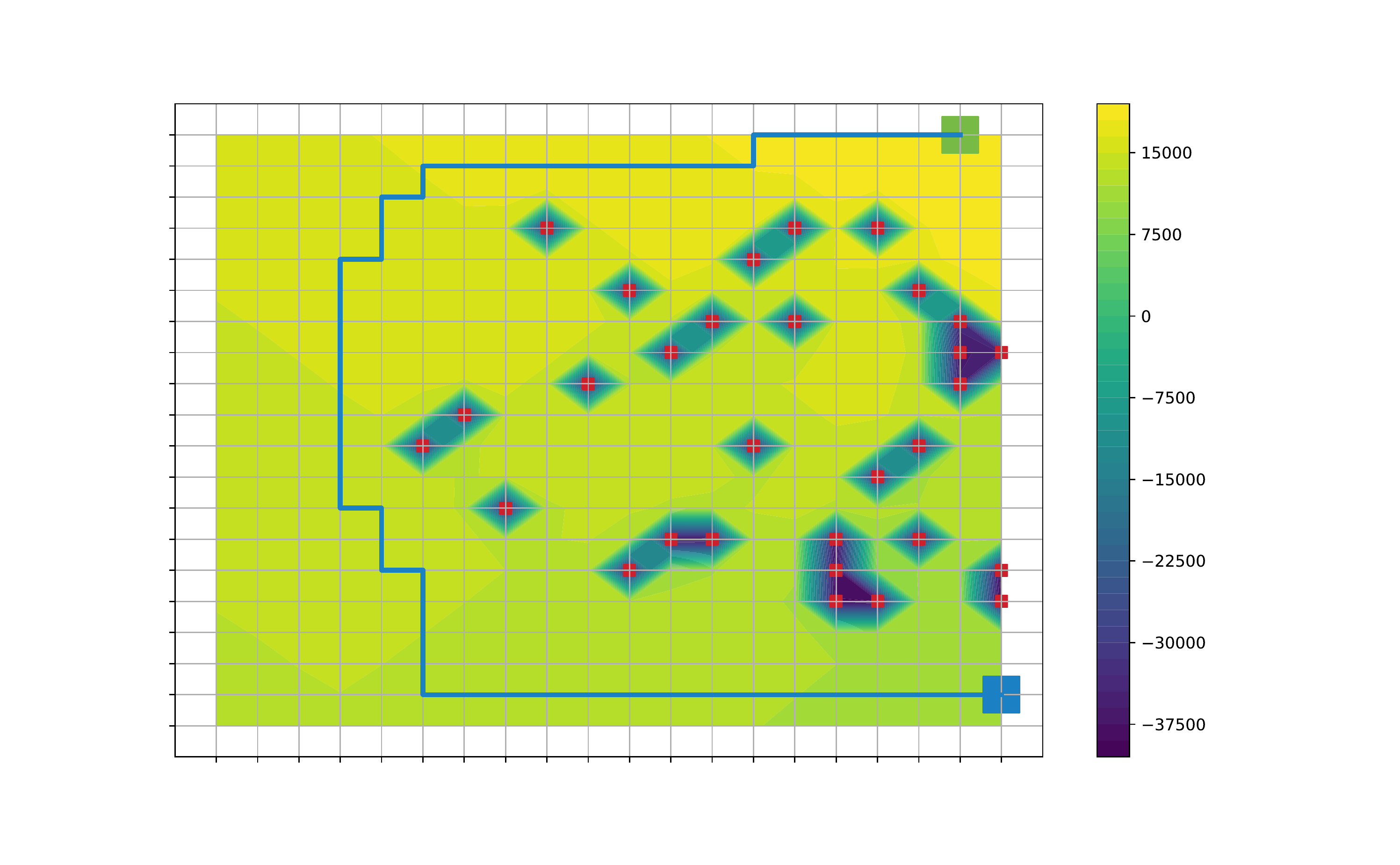}
     \subcaption{$P_{\pi^*}^s=99.90\%, E=5$}
   \end{minipage}    
 \caption{Filled contour maps of the value function and the learned control policy for the grid world robot navigation problem. With a lower upper bound on the cost constraint $(E)$, the policy is more risk-averse and achieves a higher probability of success for reaching the goal $P_{\pi^*}^s$. The shown path is one realization based on the learned policy.}
 \label{gridvalue}
\end{figure*}

\subsection{Solar Powered UAV-Based Wireless Networks}
As a second application domain, we study a wireless network management problem, which is of interest to the wireless networking research community. We consider a solar-powered UAV-based wireless network consisting of one UAV which relays data from network servers to $N$ wireless devices. Wireless devices are independently and uniformly deployed within a circular geographical area $\mathbb{A}$ with radius $R_c$. The UAV is deployed at the center of $\mathbb{A}$, and is  equipped  with solar panels which harvest solar energy to replenish its on-board battery storage. Because solar light is attenuated through the cloud cover, solar energy harvesting is highest at higher altitudes. System time is slotted into fixed-length discrete time units indexed by $t$. During a time slot $t$, the solar-powered UAV provides downlink wireless coverage to wireless devices on the ground. The UAV is equipped with an intelligent controller which at each time slot $t$, controls its altitude $z^t$, its directional antenna half power beamwidth angle $\theta_B^t$, and its transmission power $P_\text{TX}^t$ in dBm, based on its battery energy state and altitude, to maximize the wireless coverage probability for the worst case edge user, while ensuring energy sustainability of the solar-powered UAV. Note that the worst case edge user is the user which is farthest from the UAV. Due to distance dependent free-space path loss, the received signal at the edge user from the UAV is heavily attenuated, resulting in a low Signal to Noise (SNR) ratio and decreased coverage probability. Thus, by maximizing the coverage probability of the edge user, the communication performance is improved for every user in the network, albeit not proportionally. This wireless network control problem can be modeled as follows,
\subsubsection{Communication Model}
during time slot $t$, the antenna gain can be approximated by \cite{mozaffari2016efficient},
\begin{equation}
G_\text{Antenna}^t = 
\begin{cases} \frac{29000}{(\theta_B^t)2},~~~~  \frac{\theta_B^t}{2} \leq  \phi \leq \frac{\theta_B^t}{2} 
\\
g(\phi),~~~~\text{otherwise},
\end{cases}
\end{equation}
where $\theta_B^t$ is in degrees, $\phi$ is the sector angle, and $g(\phi)$ is the antenna gain outside the main lobe. The antenna gain in dB scale is $G_{dB}^t = 10\text{log}_{10}(G^t_\text{Antenna})$. The air-to-ground wireless channel can be modeled by considering Line-of-Sight (LoS) and non-Line-of-Site (NLos) links between the UAV and the users separately. The probability of occurrence of LoS for an edge user during time slot $t$ can be modeled by $P_{LoS}^t = \zeta \Big( \frac{180}{\pi} \psi^t -15 \Big)^\eta$, where $\zeta$ and $\eta$ are constant values reflecting the environment impact, and $\psi^t$ is the elevation angle between the UAV and the edge user during time slot $t$, $\psi^t = \text{tan}^{-1}(z^t/R_c)$ \cite{mozaffari2016efficient}. The probability of an NLoS link is $P_{NLoS}^t =1-P_{LoS}^t$. The shadow fading for the LoS and NLoS links during time slot $t$ can be modeled by normal random variables $\eta_{LoS}^t \sim \mathcal{N}(\mu_{LoS}, (\sigma^t_{LoS})^2)$ and $\eta_{NLoS}^t \sim \mathcal{N}(\mu_{NLoS}, (\sigma^t_{NLoS})^2)$, respectively, where the mean and variance are in dB scale and depend on on the elevation angle and environment parameters,  $\sigma_{LoS}^t(\psi^t) = k_1 \text{exp}(-k_2 \psi^t)$, $\sigma_{NLoS}^t(\psi^t) = g_1 \text{exp}(-g_2 \psi^t)$ \cite{mozaffari2016efficient}. Accordingly, the coverage probability for the edge user during time slot $t$, defined as the probability the received SNR by an edge user is larger than a threshold ($\text{SNR}_{\text{Th}}$) is,
\begin{equation}
\begin{aligned}
P_{\text{cov}}^t = &P_{LoS}^t \mathcal{Q}\Big( \frac{-P_{TX}^t -G^t_\text{dB} + \mu_{LoS} + L_{dB} + P_{min}}{\sigma_{LoS}} \Big) \\
&+ P_{NLoS}^t \mathcal{Q}\Big(\frac{-P_{TX}^t -G^t_\text{dB} + \mu_{NLoS} + L_{dB} + P_{min}}{\sigma_{NLoS}}  \Big) \\
\end{aligned}
\end{equation}
where $\mathcal{Q}\big(x \big)=1-P(X\leq x)$ for a standard normal random variable $X$ with mean $0$ and variance $1$, $P_{min} = 10\text{log}_{10}(\mathcal{N}_0\text{SNR}_\text{Th})$, $\mathcal{N}_0$ is the noise floor power, $L_{dB}$ is the distance-dependent path-loss, $L_{dB} = 10\alpha \text{log}\Big(\frac{4\pi f_0 \sqrt{(R_c)^2+(z^t)^2}}{c} \Big)$, $c$ is the speed of light, $f_0$ is the carrier frequency, and $\alpha$ is the path loss exponent. 
\subsubsection{UAV Energy Model}
the attenuation of solar light passing through a cloud is modeled by $\phi(d^{cloud}) = e^{-\hat{\beta}_c d^{cloud}}$, where $\hat{\beta}_c \geq 0$ denotes the absorption coefficient modeling the optical characteristics of the cloud, and $d^{cloud}$ is the distance that the solar light travels through the cloud \cite{sun2019optimal}. The solar energy harvesting model in time slot $t$ is, 
\begin{equation} \label{harvest}
    E_{\text{solar}}^t = \begin{cases}
    \tau \hat{S} \hat{G} \Delta t,~~~~~~~~~~\frac{z^t+z^{t+1}}{2} \geq z_{high} \\
    \tau \hat{S} \hat{G} \phi(z_{up}-z^n)\Delta t, z_{low} \leq \frac{z^t+z^{t+1}}{2} < z_{high} \\
    \tau \hat{S} \hat{G} \phi(z_{up}-z_{low})\Delta t,~\frac{z^t+z^{t+1}}{2} < z_{low}
    \end{cases}
\end{equation}
where $\tau$ is a constant representing the energy harvesting efficiency, $\hat{S}$ is  the area of solar panels, $\hat{G}$ is the average solar radiation intensity on earth, and $\Delta t$ is the time slot duration. $z_{high}$ and $z_{low}$ are the altitudes of upper and lower boundaries of the cloud. Based on this model, the output solar energy is highest above the cloud cover at $z_{high}$, and it attenuates exponentially through the cloud cover until $z_{low}$. During a time-slot $t$, the UAV can cruise upwards or downwards at a constant speed of $v_z^t$, or hover at the same altitude. The energy consumed for cruising or hovering during time slot $t$ is, 
\begin{equation}
\begin{aligned} \label{consumed}
E_{\text{UAV}}^t =& \left( \frac{W^2/(\sqrt{2}\rho A)}{\sqrt{2}V_h} \right) \Delta t \\
& + \left( Wv_z^t + P_{\text{static}} + P_{TX}^t \right) \Delta t, \\
&v_z^t = \frac{z^{t+1} - z^t}{\Delta t},~~~V_h = \sqrt{\frac{W}{2 \rho A}}
\end{aligned}
\end{equation}
Here, $W$ is the weight of the UAV, $\rho$ is air density, and $A$ is the total area of UAV rotor disks. $P_{\text{static}}$ is static power consumed for maintaining the operation of UAV. It is worth to mention that cruising upwards consumes more power than cruising downwards or hovering. Denote the battery energy storage of the UAV at the beginning of slot $t$ by $B^t$. The battery energy of the next slot is given by, 
\begin{equation}
    B^{t+1} = \text{max} \big\{0, \text{min} \{B^{t}+  E_{\text{solar}}^t - E_{\text{UAV}}^t, B_{\text{max}} \} \big\} 
\end{equation}

\subsubsection{CMDP Formulation}
this constrained control problem of maximizing the wireless coverage probability for the worst case edge user, while ensuring energy sustainability of the solar-powered UAV can be modeled as a discrete-time CMDP with discrete state-action spaces as follows. First, the altitude of the UAV is discretized into $N_z$ discrete units of $\frac{z_{max}-z_{min}}{N_z}$. Let the set of possible UAV altitudes be $\mathcal{Z} = \{z_{min}, z_{min}+\frac{z_{max}-z_{min}}{N_z}, z_{min} + 2 \frac{z_{max}-z_{min}}{N_z}, \cdots\}$. In addition, the finite battery energy of the UAV is discretized into $N_b$ energy units, where each energy unit is $e_u = \frac{B_{max}\times 60 \times 60}{N_b}$ Joules. Let $\mathcal{B} \in \{0, e_u,\cdots, (N_b-1)e_u\}$ be the set of possible UAV battery energy levels. Accordingly, the CMDP can be formulated as follows,

\begin{enumerate}
\item The state of the agent is the battery energy level and UAV altitude, $\forall s_t \in \mathcal{S}$, $s_t = (B^t, z^t)$, where  $B^t \in \mathcal{B}$ and $z^t  \in \mathcal{Z}$. Thus $\mathcal{S}=\mathcal{B}\times \mathcal{Z}$.
\item The agent controls the UAV vertical velocity $v_z^t \in \mathcal{A}_z$, the antenna transmission power $P_\text{TX}^t \in \mathcal{A}_{P_{TX}}$, and the half power beamwidth angle $\theta_B^t \in \mathcal{A} _{\theta_B}$. Thus, $\forall a_t \in \mathcal{A}$,  $\mathcal{A} =  \mathcal{A}_z \times \mathcal{A}_{P_{TX}} \times \mathcal{A}_{\theta_B}$, where $\mathcal{A}_z =\{-4,0,4\}$ m/s, $\mathcal{A}_{P_{TX}} = \{34, 38\}$ dBm, and $\mathcal{A} _{\theta_B} = \{28^o,56^o\}$.
\item The immediate reward is the wireless coverage probability for the edge user during time slot $t$, $\mathcal{R}(s_t,a_t) = P_\text{cov}^t$.

\item The immediate cost is the change in the battery level, $\mathcal{C}(s_t,a_t) = B^t-B^{t+1}$.

\item  $E = -\Delta B$, where  $\Delta B$  is the minimum desired battery energy increase over the initial battery energy.

\item Model dynamics \big\{$P(s_{t+1}|s_t,a_t),\forall s,a$\big\}: battery evolution of the UAV is modeled as an $M/D/1/N_b$ queue.  Energy arrival is according to a poisson process with rate $\lambda^t = \frac{E_\text{solar}^t}{e_u}$. Energy arrivals which see a full battery are rejected, and do not further influence the system. The finite battery size acts as a regulator on the queue size. Energy departure is deterministic and depends on the action taken by the controller. Hence, energy departure rate  is $\mu_t = \frac{E^t_{UAV}(a_t)}{e_u}$. Utilization factor of the battery is therefore $\rho_B^t = \frac{\lambda_t}{\mu_t}=\frac{E_\text{solar}^t}{E^t_\text{UAV}(a_t)}$. Based on poisson energy arrivals, the probability of $k$ arrivals during a one unit energy departure given action $a_t$ is taken,
\begin{equation} \label{ch4:model}
P\{k \text{~energy arrivals} |a_t \} =\frac{(\rho_B^t)^ke^{-\rho_B^t}}{k!}
\end{equation}
On the other hand, altitude state evolution is determinsitic based on $v_z^t$. Hence, the transition probability function of the CMDP can be derived based on \eqref{ch4:model} and the probability transition matrix of the embedded Markov chain with action $a_t$ for an $M/D/1/N_b$ queue  \cite{garcia2002transient}.
\end{enumerate}

This wireless communication system exhibits a trade-off between the altitude of the UAV and the coverage probability for the edge user. When the UAV hovers at a higher altitude, it can harvest more solar energy to replenish its on-board battery. However, at higher altitudes, the wireless coverage probability is worse due to signal attenuation. An optimal control policy should be learned to maximize the coverage probability for the edge user while ensuring the UAV's battery is not depleted. 
\begin{figure}
\centering
   \begin{minipage}[b]{.66\linewidth}
     \centering
     \includegraphics[width=1\textwidth]{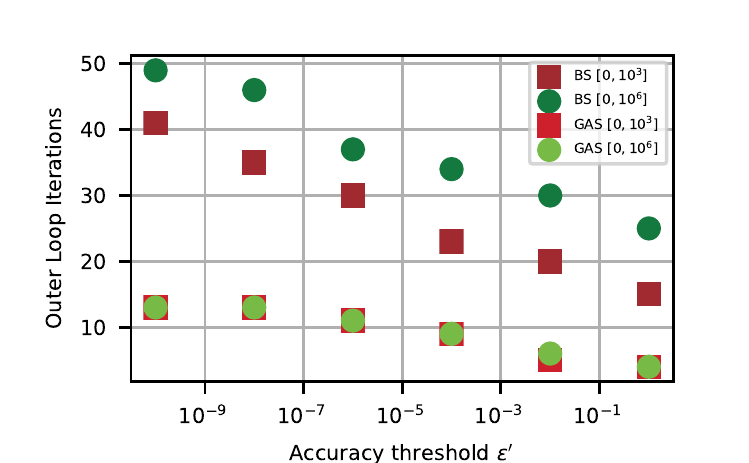}
     \subcaption{Comparison with binary search}
   \end{minipage} 
   \begin{minipage}[b]{.66\linewidth}
     \centering
     \includegraphics[width=1\textwidth]{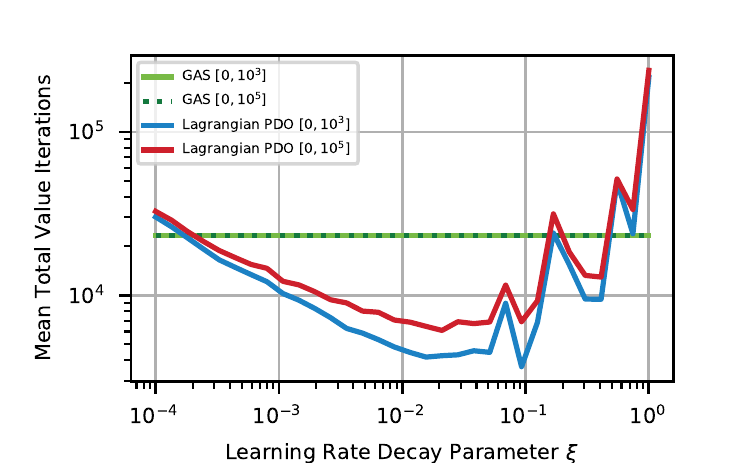}
     \subcaption{Comparison with Lagrangian approach}
   \end{minipage} 
 \caption{Performance comparison on the solar powered UAV-Based wireless network management problem}
 \label{uavcomp}
\end{figure}
\begin{figure*}[ht]
   \begin{minipage}[b]{.245\linewidth}
     \centering
     \includegraphics[width=1\textwidth]{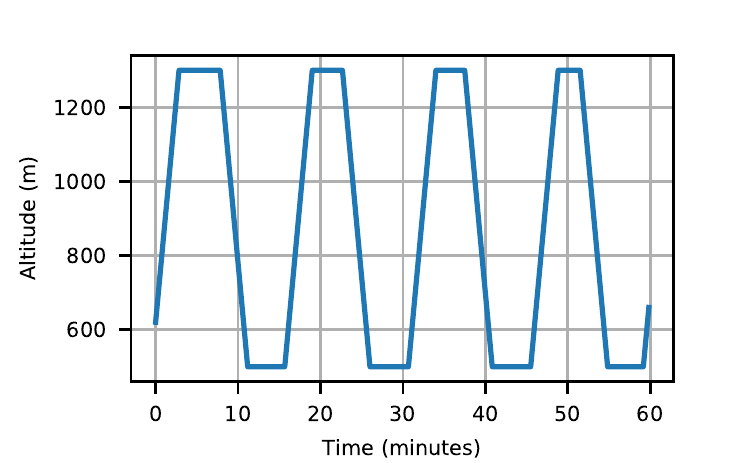}
     \subcaption{UAV altitude}
   \end{minipage} 
   \begin{minipage}[b]{.245\linewidth}
     \centering
     \includegraphics[width=1\textwidth]{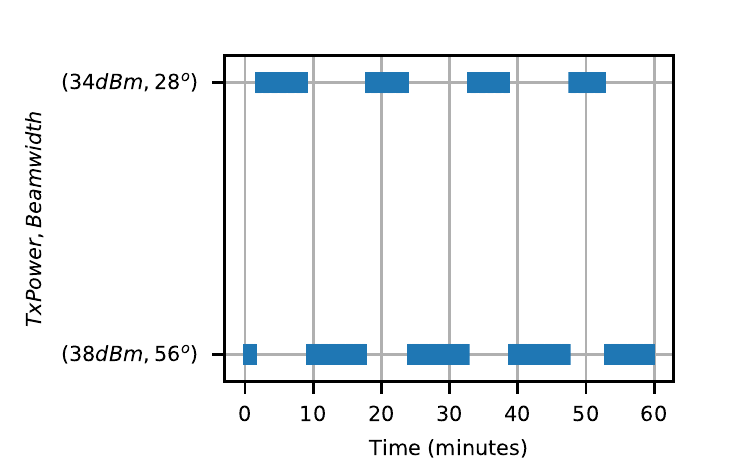}
     \subcaption{Tx power and beamwidth}
   \end{minipage} 
    \begin{minipage}[b]{.245\linewidth}
     \centering
     \includegraphics[width=0.95\textwidth]{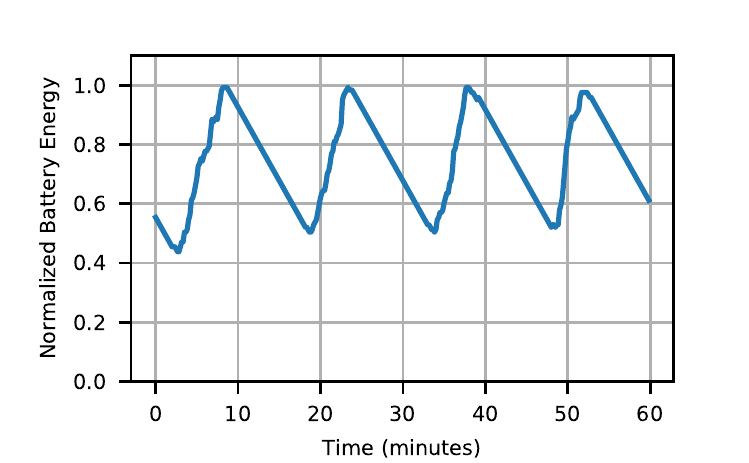}
     \subcaption{Battery energy evolution}
   \end{minipage}
   \begin{minipage}[b]{.245\linewidth}
     \centering
     \includegraphics[width=1\textwidth]{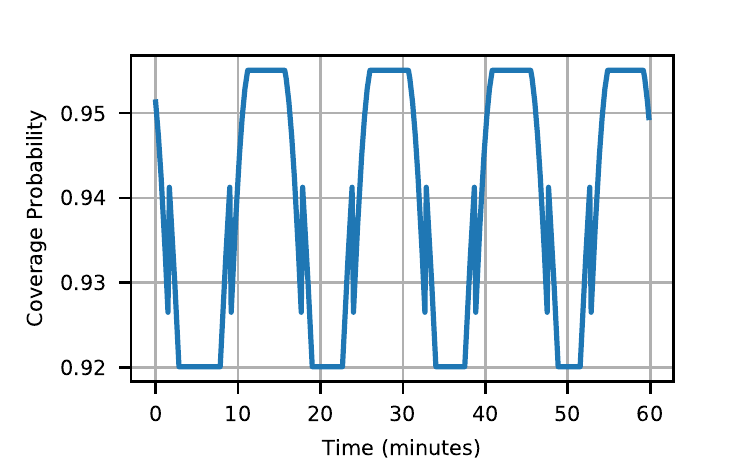}
     \subcaption{Edge user coverage probability}
   \end{minipage}    
 \caption{Learned control policy for the solar powered UAV-Based wireless network management problem}
 \label{uavresults}
\end{figure*}
\subsubsection{Experiment Results} simulation parameters for this experiment are outlined in Table \ref{table:simparam}. Based on the chosen discretization levels $N_z$ and $N_b$, the CMDP has $|\mathcal{S}|=3025$ states and $|\mathcal{A}|=12$. 
\begin{table}[ht]  
\caption{Simulation parameters for the solar powered UAV-Based wireless network management problem}   
\centering                          
\begin{tabular}{c c | c c }            
\hline\hline                        
Parameter  & Value &  Parameter & Value \\ [0.5ex] 
\hline                              
$N_z$  & $121$  & $N_b$ & $25$    \\
$\alpha$  & $2.5$  & $\tau$ & $0.4$    \\
$B_{max}$ & $100$ Wh & $\Delta B$ & $1.67$ Wh \\ 
$\Delta t$ & $10s$    & $\Tilde{S}$ & $1m^2$   \\
$f_0$ & $2$ GHz  &$\Tilde{G}$ & $1367 W/m^2$   \\
$g(\phi)$ & $0$   &$W$ &  $39.2 kg*m/s^2$   \\
$n_0$ & $-100dBm$ & $\rho$ &  $1.225kg/m^3$ \\
$N_b$ & $25$ &  $A$ &  $0.18m^2$\\
$N_z$ & $121$  & $P_{static}$ &  $5$ watts \\
$\text{SNIR}_\text{Th}$ & $5$  & $R_c$  & $250$ m \\
$\hat{\beta}_c$ & $0.01$ & $k_1,~k_2$ & $10.39,~0.05$ \\
$z_{high},z_{low}$ & $1.3,0.7$km & $\Delta z_{min},\Delta z_{max}$ & $-40m,40m$  \\
$z_{min},z_{max}$ & $0.5,1.5$km &  $g1,~g2$ & $29.06,~0.03$  \\
$\mu_{LOS},~\mu_{NLOS}$ & $1,~20dB$ & $\zeta$ &  0.6 \\
$\gamma$ & $0.99$  &  $\eta$ & $0.11$   \\
\hline                              
\end{tabular}          \label{table:simparam}   
\end{table} 
In Figures \ref{uavcomp}(b) and \ref{uavcomp}(c), we compare the convergence performance of the proposed GAS with BS and the Lagrangian PDO approach, respectively. As previously noted from Figures \ref{gridresults}(b) and \ref{gridresults}(c), it can be see that the proposed algorithm compares favourably to BS and Lagrangian PDO, despite the increased problem size.  From Table \ref{LP}, we 
can observe that both the proposed GAS and Gurobi's LP solver converge to the same $\mu^*$, although GAS achieves a lower Bellman error in the estimation of the value function.

The learned policy by our proposed GAS algorithm is shown in Figure \ref{uavresults}. It can be seen from Figure \ref{uavresults}(a) that the agent learns an adaptive policy in which the UAV climbs up to recharge its on-board battery, and then climbs down when the battery is full to improve the coverage probability for the worst case edge user, as can be seen from Figures \ref{uavresults}(c) and  \ref{uavresults}(d). In addition, Figure \ref{uavresults}(b) shows the learned control policy for the transmission power and half power beamwidth angle. When the UAV is up to charge its battery, the lower transmission power and smaller beamwidth angle are selected. On the other hand, when the UAV is down to improve the coverage probability, the higher transmission power and larger beamwidth angle are selected. This is because with a smaller beamwidth angle, the antenna gain is higher, and a lower transmission power is required to counter the effects of large scale fading.

\section{Conclusion}
In this brief, we have proved that the optimization objective in the dual linear programming formulation of a constrained Markov Decision process (CMDP) is a piece-wise linear convex function (PWLC) with respect to the Lagrange penalty multiplier. Based on this result, a novel two-level Gradient-Aware Search (GAS) algorithm which exploits the PWLC structure has been proposed to find the optimal state-value function and Lagrange penalty multiplier of a CMDP. We have applied the proposed algorithm on two different problems, and compared its performance with binary search, Lagrangian primal-dual optimization, and linear programming. Compared with existing algorithms, it has been shown that our proposed algorithm converges to the optimal solution faster, does not require hyper-parameter tuning, and is not sensitive to initialization of the Lagrange penalty multiplier. In our future work, we will study the extension of the proposed GAS algorithm to the model-free reinforcement learning problem.

\section*{Acknowledgment}
This work was supported in part by the NSF grants ECCS-1554576 and ECCS-1610874. We gratefully acknowledge the computing resources provided on Bebop, a high-performance computing cluster operated by the Laboratory Computing Resource Center at Argonne National Laboratory.

\bibliographystyle{IEEEtran}
\bibliography{Bibliography}
\end{document}